\title{Dynamic Tree Databases in Automated Planning}
\author{
    Oliver Joergensen\textsuperscript{\rm 1},
    Dominik Drexler\textsuperscript{\rm 2},
    Jendrik Seipp\textsuperscript{\rm 3},
}
\pgfplotsset{compat=1.18}
\newcommand{\Omit}[1]{}
\newcommand{\defined}[1]{\emph{#1}}
\newcommand{\dtreedbsswiss}{DTDB-S}
\newcommand{\dtreedbshashid}{DTDB-H}
\newcommand{\resizefactor}{\ensuremath{\delta}}
\newcommand{\treestructure}{\ensuremath{\lambda}}
\definecolor{tblue}{RGB}{0,119,187}
\definecolor{tcyan}{RGB}{51,187,238}
\definecolor{tteal}{RGB}{0,153,136}
\definecolor{torange}{RGB}{238,119,51}
\definecolor{tred}{RGB}{204,51,17}
\definecolor{tmagenta}{RGB}{238,51,119}
\definecolor{tgray}{RGB}{187,187,187}
\newtheorem{theorem}{Theorem}
\theoremstyle{definition}
\definecolor{darkgreen}{rgb}{0.0, 0.2, 0.13}
    \newcommand{\oliver}[1]{\textcolor{orange}{Oliver: #1}}
    \newcommand{\dominik}[1]{\textcolor{magenta}{Dominik: #1}}
    \newcommand{\jendrik}[1]{\textcolor{blue}{Jendrik: #1}}
    \newcommand{\oliver}[1]{}
    \newcommand{\dominik}[1]{}
    \newcommand{\jendrik}[1]{}
\newcommand{\notes}[1]{}
\newcommand{\inlinecite}[1]{\citet{#1}}
\newcommand{\egcite}[1]{\citep[e.g.,][]{#1}}
\newcommand{\denselist}{\itemsep 0pt\partopsep 0pt}
\newcommand{\tuple}[1]{\ensuremath{\langle #1 \rangle}}
\newcommand{\datastructure}{\ensuremath{\mathcal{D}}}
\newcommand{\representation}{\ensuremath{\mathit{rep}}}
\newcommand{\bigO}{\ensuremath{\mathcal{O}}}
\newcommand{\bigTheta}{\ensuremath{\Theta}}
\newcommand{\opinsert}{\ensuremath{\mathit{insert}}}
\newcommand{\oplookup}{\ensuremath{\mathit{lookup}}}
\newcommand{\wordsize}{\ensuremath{w}}
\newcommand{\hash}{\ensuremath{h}}
\newcommand{\key}{\ensuremath{k}}
\newcommand{\keypos}{\ensuremath{i}}
\newcommand{\capacity}{\ensuremath{c}}
\newcommand{\alphabet}{\ensuremath{\Sigma}}
\newcommand{\indexedhashset}{\ensuremath{f}}
\newcommand{\vertex}{\ensuremath{v}}
\newcommand{\leftvertex}{\ensuremath{l}}
\newcommand{\rightvertex}{\ensuremath{r}}
\newcommand{\sequence}{\ensuremath{z}}
\newcommand{\sequencelength}{\ensuremath{k}}
\newcommand{\sequencecount}{\ensuremath{n}}
\newcommand{\statetransitionsystem}{\ensuremath{\mathcal{S}}}
\newcommand{\states}{\ensuremath{S}}
\newcommand{\state}{\ensuremath{s}\xspace}
\newcommand{\action}{\ensuremath{a}}
\newcommand{\initialstate}{\ensuremath{s_0}}
\newcommand{\goalstates}{\ensuremath{\states_G}}
\newcommand{\actions}{\ensuremath{A}}
\newcommand{\applicable}{\ensuremath{\mathit{App}}}
\newcommand{\successor}{\ensuremath{\mathit{Succ}}}
\newcommand{\propositionalvariable}{\ensuremath{p}}
\newcommand{\fdrvariable}{\ensuremath{v}}
\newcommand{\numericvariable}{\ensuremath{n}}
\newcommand{\mutexgroup}{\ensuremath{g}}
\newcommand{\effect}{\ensuremath{\mathit{eff}}}
\newcommand{\permutation}{\ensuremath{\pi}}
\newcommand{\gain}{\ensuremath{\mathit{gain}}}
\newcommand{\affinity}{\ensuremath{\mathit{aff}}}
\newcommand{\bin}{\ensuremath{B}}
\newcommand{\variables}{\ensuremath{\mathcal{V}}}
\newcommand{\variable}{\ensuremath{v}}
\newcommand{\none}{\ensuremath{\textit{none}}}
\newcommand{\unaligned}[1]{\noalign{\centering \ensuremath{#1}\\\kern-\baselineskip}}
\newcommand{\coverage}{\#C}
\newcommand{\outofmem}{\#M}
\newcommand{\outoftime}{\#T}
\newcommand{\memoryscore}{MS}
\algrenewcommand\algorithmicindent{1.0em}%
\newcommand{\asep}{\hspace{15pt}}
\pgfplotsset{compat=1.17}
\newif\ifarxiv
\begin{document}

\maketitle

\begin{abstract}

A central challenge in scaling up explicit state-space search for large tasks is compactly representing the set of generated states.
Tree databases, a data structure from model checking, require constant space per generated state in the best case, but they need a large preallocation of memory.
We propose a novel \emph{dynamic} variant of tree databases for compressing state sets over propositional and numeric variables and prove that it maintains the desirable properties of the static counterpart.
Our empirical evaluation of state compression techniques for grounded and lifted planning on classical and numeric planning tasks reveals compression ratios of several orders of magnitude, often with negligible runtime overhead.

\end{abstract}

\section{Introduction}
Explicit state space search is a widely used technique for solving classical and numeric planning tasks \egcite{hoffmann-nebel-jair2001,richter-westphal-jair2010,lipovetzky-geffner-ecai2012,seipp-et-al-jair2020, kuroiwa-et-al-jair2022, scala-et-al-ecai2016}. It iteratively expands states, starting from the initial state, to generate their successors, and is often guided by a goal-directed heuristic to reach a goal state.
Even with strong heuristics, the set of generated states can be huge. As a result, compactly representing them is a fundamental challenge, as it enables the storage and exploration of more states within limited memory.

One approach for compressing state sets is to use \emph{tree databases} \cite{blom-et-al-entcs2008}.
They are a family of data structures that originate from model checking and support efficient state insertion and lookup operations, which are essential for explicit search.
A tree database is a forest of trees, each encoding a sequence over a finite alphabet. In automated planning, the alphabet consists of indices of propositional variables or values of numeric variables. Shared subtrees enable significant memory savings, and in the best case, the space required per state approaches the information-theoretical lower bound when successor states differ from the parent state on average in only a single variable \cite{laarman-isse2019}. These properties make tree databases particularly interesting for automated planning, where each ground action typically changes only a few state variables.

Previous work on tree databases proposed implementing them as statically sized \emph{hash ID maps} \egcite{darragh-et-al-jspe1993, blom-et-al-entcs2008, laarman-et-al-spin2011} and compact hashing \cite{cleary-ieeecomp1984}. A hash ID map is a hash table that maps keys to identifiers that are assigned by the data structure itself. The main drawback is that statically sized tree databases require manual memory management with potentially large preallocations, which are difficult to estimate in advance.

In this paper, we propose a dynamic variation of tree databases that effectively addresses these drawbacks. We also conduct an extensive empirical evaluation on classical and numeric planning benchmarks, covering both grounded and lifted settings. Specifically, we evaluate our method on two representative state encodings: the finite-domain representation used for grounded planning in Fast Downward \cite{helmert-jair2006}, and the sparse propositional and dense numeric representation used for lifted planning in Mimir \cite{stahlberg-ecai2023}. In terms of state expansion rates, both planners are state-of-the-art \cite{seipp-et-al-jair2020, drexler-2025}, making them ideal candidates for evaluating our dynamic tree databases, since they can generate a large number of states in a short time.
Our empirical evaluation reveals significant reductions in overall memory usage during uninformed search for larger instances, resulting in a higher number of solved tasks for the lifted case, with negligible runtime overhead. Moreover, we observe compression ratios of up to three orders of magnitude, demonstrating that tree databases are a powerful technique for scaling explicit state space search in automated planning to larger tasks. While similar reductions can be observed in the grounded setting, since the existing data structures already exhibit high memory efficiency, only a limited subset of instances shows reduced memory usage. In summary, our key contributions are:
\begin{enumerate} \denselist
    \item We propose \emph{dynamically}-sized tree databases for compressing sequences $\sequence$ of $\sequencelength$ elements over arbitrary finite alphabets with amortized $\bigTheta(\sequencelength)$ insertion time.
    \item We apply our dynamic tree databases to compress planning states that consist of both propositional and numeric state variables.
    \item We empirically demonstrate the effectiveness of dynamic tree databases on planning benchmark sets.
\end{enumerate}

\section{Related Work}

In this section, we present several related works on compression techniques for state space search. For this discussion, we assume a {RAM} model with $\wordsize$-bit words, and a set of $\sequencecount$ states, each consisting of $\sequencelength$ elements from a finite universe.

\paragraph{Optimal Compression of Combinatorial State Spaces.} \inlinecite{laarman-isse2019} analyzes the optimal compression ratio of combinatorial state spaces from an information-theoretical perspective. More precisely, they seek a lower bound on the number of bits required to encode a combinatorial state space.
They assume that on average only a single variable changes, which is uniformly picked, resulting in a lower bound of a single $\wordsize$-bit word plus $\bigO(\log_2(\sequencelength))$ bits per state.
This result also applies to classical planning, where ground actions usually modify only a small number of state variables.

\paragraph{Finite-Domain Representation (FDR).} A set of propositional state variables forms a \emph{mutex group}, if at most one of them is true in every reachable state \cite{helmert-aij2009, fiser-aaai2020}.
A mutex group $\mutexgroup$ with $m = |\mutexgroup|$ propositions can be represented compactly by introducing a single \emph{finite-domain variable} with $m$ values, encoded as a $\wordsize$-bit word.
We call the representation that uses one $\wordsize$-bit word per finite-domain variable the \emph{unpacked} FDR representation.
As an additional compression, each value can be encoded in $\lceil \log_2(m)\rceil$ bits, rather than $\wordsize$ bits. The \emph{packed} FDR representation of a state $\state$ concatenates the binary representations of the values of every finite-domain variable in $\state$. Its size scales polynomially in the number of variables, which can become a limiting factor.

\paragraph{Binary Decision Diagrams (BDDs).}
Binary Decision Diagrams (BDDs) are canonical graph-based representations of Boolean functions \cite{bryant-dac1985}. BDDs are represented as directed acyclic graphs. In symbolic search \cite{burch-et-al-iandc1992,edelkamp-kissmann-aaai2008,torralba-et-al-aij2017}, BDDs symbolically represent the successor function. Applying the successor function yields sets of states that are also symbolically represented as BDDs. The size of a BDD depends on the ordering of its variables. In the best case, it can represent exponentially many states in space that is polynomial in the number of variables. Finding an optimal variable ordering that minimizes the size of a BDD is NP-complete \cite{bollig-wegener-ieeecomp1996}.
In explicit search, any state needs to be uniquely indexable. BDDs represent a single state as a terminating path from the root note. As there may be multiple such paths, one cannot uniquely identify a state based on a node index, making BDDs inappropriate for explicit search.

\paragraph{Level-Ordered Edge Sequences (LOES).}
Unlike BDDs, LOES avoids the use of pointers by encoding the binary decision tree as a flat edge sequence ordered by tree depth, leading to a more compact memory layout \cite{schmidt-zhou-aaai2011}. Similar to BDDs, its compression efficiency is sensitive to the ordering of state variables. Efficient insertion into LOES requires batching, which amortizes the cost of updating the structure. For a set of $\sequencecount$ states of size $\sequencelength$, LOES requires at most twice the size of the concatenated packed FDR representation in the worst case and as few as $2\sequencecount$ bits in the best case when prefix sharing is maximal, when $\sequencecount>\sequencelength$.
LOES, however, is unsuitable for tasks that incrementally add states, such as explicit search, because of overly expensive and frequent resizing operations.

\paragraph{Decoupled Search.}
Decoupled search \cite{gnad-hoffmann-aij2018} exploits the task structure by partitioning the variables into a central component and leaf components, wherein the leaf components depend solely on the central component.
One aspect that makes decoupled search effective is its compact state representation, achieved by combining the state of the central component with a reachability function over the leaf states.
These two jointly represent a set of states.
The function is a mapping that indicates whether a leaf state is reachable from the initial state via the current central state, and is implemented practically as a set of unique \emph{IDs} that map to leaf states.
In the best case, the state size reduces to a polynomial in the number of leaf factors and central component variable.
However, in the worst case, the reachable decoupled state space can be exponentially larger than the explicit state space.

\paragraph{Tree Databases.} Tree databases are forests of binary trees, each encoding an input sequence over an arbitrary finite alphabet \cite{blom-et-al-entcs2008}. See Figure~\ref{fig:tree-database} for an example. In essence, tree databases subdivide each input sequence until the leaves contain pairs of elements, while inner nodes are pairs that point to their left and right children. The trees in the forest may share common subtrees, reducing memory usage. Tree databases can support parallelization \cite{blom-et-al-entcs2008} and variable-length input sequences \cite{freark-nfm2021}. The worst-case representation size for a state is $2\sequencelength\wordsize - 2\wordsize$, and the optimal size approaches $2\wordsize$, which is constant \cite{blom-et-al-entcs2008}. In the best case, the space usage of tree databases approaches the information-theoretical lower bound when only a single variable changes between parent and successor state \cite{laarman-isse2019}. Tree databases are often implemented as hash ID maps \cite{darragh-et-al-jspe1993} and combined with compact hashing \cite{cleary-ieeecomp1984} for space efficiency.

\begin{figure}[t]
    \centering
    \begin{subfigure}{0.47\textwidth}
        \centering
        \includegraphics[width=0.67\textwidth]{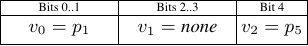}
        \caption{Packed FDR representation of state $\state$ in which only propositions $\propositionalvariable_1$ and $\propositionalvariable_5$ hold. Assume that the task has mutex groups $\{\propositionalvariable_0, \propositionalvariable_1, \propositionalvariable_2\}$, $\{\propositionalvariable_3, \propositionalvariable_4\}$, and $\{\propositionalvariable_5\}$, which give rise to the FDR variables $\fdrvariable_0$, $\fdrvariable_1$, and $\fdrvariable_2$, with domains $\{\propositionalvariable_0, \propositionalvariable_1, \propositionalvariable_2, \none\}$, $\{\propositionalvariable_3, \propositionalvariable_4, \none\}$, and $\{\propositionalvariable_5, \none\}$, respectively. (The $\none$ value indicates that none of the propositions in the mutex group holds.) 
 It sequentially encodes the values of $\fdrvariable_0$, $\fdrvariable_1$, and $\fdrvariable_2$ with 2, 2, and 1 bits, respectively, resulting in a total of 5 bits.}
        \label{fig:dense-fdr-representation}
    \end{subfigure}

    \vspace{0.2cm}

    \begin{subfigure}{0.47\textwidth}
        \centering
        \includegraphics[width=0.67\textwidth]{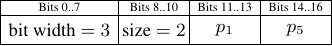}
        \caption{Sparse propositional representation of state $\state$, in which only propositions $\propositionalvariable_1$ and $\propositionalvariable_5$ hold. The first field uses one byte to encode the maximum number of bits necessary to store the state length and atom indices ($\lceil \log_2(\max(2, 1, 5))\rceil = 3$), followed by the number of propositions, and the indices of the two propositions, resulting in a total of 17 bits.}
        \label{fig:sparse-propositional-representation}
    \end{subfigure}

    \vspace{0.2cm}

    \begin{subfigure}{0.47\textwidth}
        \centering
        \includegraphics[width=0.47\textwidth]{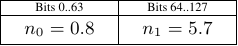}
        \caption{Dense numeric representation of $\numericvariable_0 = 0.8$ and $\numericvariable_1=5.7$. It sequentially encodes the values of every numeric variable using binary64 encoding, with 64 bits per variable, resulting in a total of 128 bits.}
        \label{fig:dense-numeric-representation}
    \end{subfigure}

    \caption{(Partial) state representations for a task with propositions $\propositionalvariable_0$, \dots, $\propositionalvariable_5$ and numeric variables $\numericvariable_0$ and $\numericvariable_1$.}
    \label{fig:rep-example}
\end{figure}

\section{Background}

In this section, we present the background of classical and numeric planning, state set compression, and tree databases.

\subsection{Classical and Numeric Planning}

A (numeric) \defined{planning task} is typically specified in a compact, factored representation, such as a first-order description in the Planning Domain Definition Language (PDDL) \cite{mcdermott-et-al-tr1998,fox-long-jair2003} or a grounded $\text{SAS}^+$ encoding \cite{backstrom-nebel-compint1995}. If a task contains only propositional variables (i.e., no numeric state variables), it is called a \emph{classical planning task}.
Each planning task induces a (possibly infinite) state-space model that captures its semantics. In this work, we operate at this semantic level and do not rely on the internal syntactic structure of each action. A \defined{state space model} is a tuple $\statetransitionsystem = \langle \states, \initialstate, \allowbreak \goalstates, \actions, \allowbreak \applicable, \allowbreak \successor \rangle$, where $\states$ is the set of states, in which each state is a value assignment to a set of propositional and numeric variables, $\initialstate \in \states$ is the initial state, $\goalstates \subseteq \states$ is the set of goal states, $\actions$ is a set of actions, $\applicable$ is a function that maps each state $\state$ to a set of actions from $\actions$ that are applicable in $\state$, $\successor$ is a function that maps each state $\state$ and action $\action$ in $\applicable(\state)$ to the deterministic successor state $\state'$ A \defined{state trajectory} is a sequence $\state_1,\state_2,\dots,\state_n$ such that for all $i=1,\dots,n-1$ there exists an action $\action\in\applicable(\state_i)$ with $\successor(\state_i,\action) = \state_{i+1}$. A state trajectory that starts in $\state_0$ and ends in a goal state is a \defined{plan}. We can find a plan by running a search, i.e., starting from the initial state, we iteratively generate the applicable actions and their successors until reaching a goal state.

We make the closed-world assumption, meaning that atoms that are not true in a state are implicitly assumed to be false. This leads to what we call the \defined{sparse propositional representation}, where a state is represented as the set of true atoms.
In planning, we distinguish between two different search modes.
First, in \defined{lifted planning}, we compute the applicable actions $\applicable(\state)$ from a given lifted first-order logic representation of the actions on the fly \cite{correa-et-al-icaps2020,stahlberg-ecai2023}.
Second, in \defined{grounded planning}, we translate the lifted representation into a grounded representation, where each atom corresponds to a Boolean state variable. In addition, we generate mutex groups over these variables to form finite-domain variables, also known as \defined{FDR variables}, where each represents the Boolean variable in its corresponding mutex group compactly \cite{helmert-aij2009}.
Due to a lack of a practically reasonable default value for numeric variables, we assume a \defined{dense numeric representation}, where each numeric variable uses 64 bits to represent its value.
Figure~\ref{fig:rep-example} illustrates the different variable representations.

\subsection{State Set Data Structure}

An explicit search algorithm requires a \emph{state set} data structure $\datastructure$ for representing the set of currently generated states, along with an $\opinsert(\state)$ operation that inserts a newly generated state $\state$ into $\datastructure$ and returns an index $i$, and a $\oplookup(i)$ operation that returns the state with index $i$. We say an insert operation is \emph{efficient} if it runs in amortized $\bigO(|\state|)$ time, where $|\state|$ denotes the number of represented variables in $\state$. Similarly, we call a lookup operation \emph{efficient} if it runs in $\bigO(|\state|)$ time.

Beyond runtime efficiency, we are also interested in the space efficiency of a state set data structure.
For a given set of states $\states'\subseteq \states$, a state set data structure $\datastructure$ maps $\states'$ via a representation function $\representation_\datastructure$ to a binary representation $\{0,1\}^*$ with size $|\representation_\datastructure(\states')|$.
The compression ratio for a state set $\states'$ is the quotient between the size of the uncompressed ($\datastructure$) and the compressed ($\datastructure'$) representation, i.e., $|\representation_\datastructure(S')| / |\representation_{\datastructure'}(\states')|$.

\subsection{Tree Databases}

\begin{figure*}[tb]
    \centering
    \includegraphics[width=0.7\textwidth]{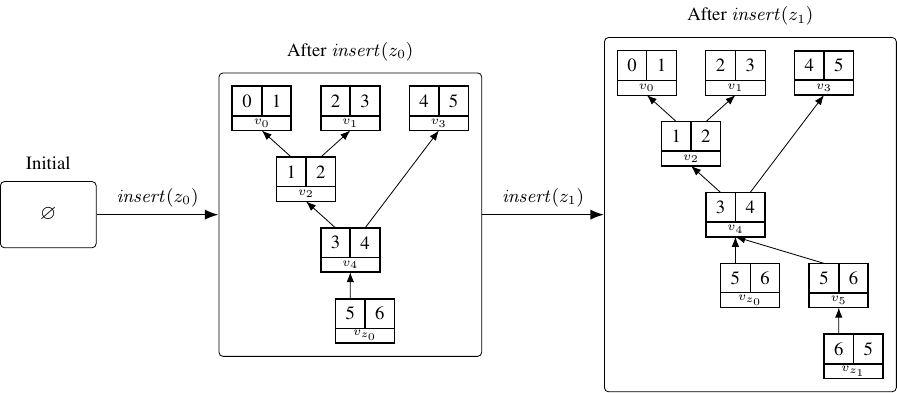}
    \caption{A tree database of perfectly balanced trees after the insertion of two sequences: $\sequence_0 = \tuple{0,1,2,3,4,5}$ and $\sequence_1 = \tuple{1,2,4,5,6}$ with respective lengths $\sequencelength_{\sequence_0} = 6$ and $\sequencelength_{\sequence_1} = 5$. We depict each node $\vertex_i$ with left and right children/values $\leftvertex(\vertex)$ and $\rightvertex(\vertex)$ at the top, and index $i=\indexedhashset(\leftvertex(\vertex), \rightvertex(\vertex))$ at the bottom. Moreover, we use boldface to denote (re-)inserted nodes. Next, we explain the tree with root node $\vertex_{\sequence_0}$ for $\sequence_0$. Since $\treestructure(\vertex_{\sequence_0}) = \lfloor(\sequencelength_{\sequence_0} + 1) / 2\rfloor = 3$, we have that $\treestructure(\vertex_2) =  2^{\lfloor \log_2(\treestructure(\vertex_{\sequence_0}) - 1) \rfloor} = 2$ and $\treestructure(\vertex_3) = \treestructure(\vertex_2)-\treestructure(\vertex_{\sequence_0}) = 1$. Next, since $\treestructure(\vertex_2) = 2$, we have that $\treestructure(\vertex_0) = 2^{\lfloor \log_2(\treestructure(\vertex_2) - 1) \rfloor} = 1$ and $\treestructure(\vertex_1) = \treestructure(\vertex_2) - \treestructure(\vertex_0) = 1$. Hence, we know that $\vertex_0, \vertex_1$, and $\vertex_3$ are the leaf nodes containing the values in order from left to right. Finally, we explain the tree with root node $\vertex_{\sequence_1}$. First, observe that the tree for $\sequence_1$ shares nodes with the tree for $\sequence_0$. More precisely, the inner node $\vertex_2$ (resp.~inner node $\vertex_4$, leaf node $\vertex_3$) in the tree for $\sequence_0$ is a leaf node (resp.~inner node, leaf node) in the tree for $\sequence_1$. Second, observe that the last integer element $6$ in $\sequence_1$ is stored directly in the right entry of inner node $\vertex_5$, i.e., $\rightvertex(\vertex_5) = 6$.}
    \label{fig:tree-database}
\end{figure*}

Ground actions in planning tasks often change only a few state variables. Consequently, a state and its successor state often share many ground atoms and numeric variable value assignments. Tree databases encode each state as a tree where each node recursively subdivides the sequence of ground atoms and numeric variable assignments. Moreover, tree databases store trees in a forest, allowing shared subtrees across states, which can result in significant space savings compared to representations that do not exploit shared structure. 

A tree database over a finite alphabet $\alphabet$ is a forest of binary trees. A \defined{tree} in a tree database over $\alphabet$ consists of nodes of the form $\vertex = \tuple{\leftvertex(\vertex),\rightvertex(\vertex)}$, where $\leftvertex(\vertex)$ and $\rightvertex(\vertex)$ are either child nodes or elements in $\alphabet$. The structure of a tree is defined by a function $\treestructure$, which maps each node $\vertex$ to the number of leaf nodes in the subtree rooted at $\vertex$. If $\treestructure(\vertex) = 1$, then $\vertex$ is a leaf node, and $\leftvertex(\vertex)$ and $\rightvertex(\vertex)$ are elements in $\alphabet$. If $\treestructure(\vertex)>1$, then the tree is \defined{perfectly balanced} iff $\treestructure(\leftvertex(\vertex)) = 2^{\lfloor \log_2(\treestructure(\vertex) - 1) \rfloor}$, $\treestructure(\rightvertex(\vertex)) = \treestructure(\vertex) - \treestructure(\leftvertex(\vertex))$.\footnote{\inlinecite{blom-et-al-entcs2008} originally defined trees as \emph{balanced} iff the number of leaf nodes at the left subtree $\leftvertex(\vertex)$ is $\treestructure(\leftvertex(\vertex)) = \lceil \treestructure(\vertex) / 2\rceil$, which is appropriate for fixed-length sequences. The notion of \emph{perfectly balanced} is beneficial for allowing variable-length sequences, as it restricts left subtrees to have a total number of leaf nodes that is a power of two. Consequently, $\lambda$ in perfectly balanced trees often creates structurally similar left subtrees, which is a prerequisite for effective node sharing among several trees consisting of varying numbers of nodes. \cite{freark-nfm2021}.}

A tree database unambiguously encodes a sequence $\sequence = \sequence_0,\sequence_1,\ldots,\sequence_{\sequencelength-1}$ consisting of $\sequencelength$ elements over $\alphabet$ in a perfectly balanced tree with root node $\vertex$ and $\treestructure(\vertex) = \lfloor (\sequencelength + 1) / 2 \rfloor$ leaf nodes such that the $i$-th leaf node is $\vertex_i = \tuple{\sequence_{2i},z_{2i+1}}$ with $i=0,\ldots,\treestructure(\vertex)-1$. If $\sequencelength$ is odd, $\sequence_{\sequencelength-1}$ has no partner, and hence, we store $\sequence_{\sequencelength-1}$ directly in $\rightvertex(\vertex')$ of its parent $\vertex'$ and remove the leaf $\vertex_{\treestructure(\vertex) - 1}$. In a tree database, there are operations for inserting and looking up a sequence. Figure~\ref{fig:tree-database} illustrates a sequence of insert operations into a tree database with perfectly balanced trees, initially empty.

Table \ref{tab:space_reqiurements_theory} shows theoretical bounds by \inlinecite{laarman-isse2019} regarding the space requirements of tree databases and conventional hash tables to represent a set of sequences of length $\sequencelength$ over $\wordsize$-bit integers. In the worst case, the tree database requires approximately twice as much space, while in the best case, the required space approaches that of two integers. Therefore, the best-case compression ratio is independent of the actual length $\sequencelength$ of a sequence.

\begin{table}[tb]
    \centering
    \begin{tabular}{@{}lcc@{}}
 Structure & Worst case & Best case \\
        \midrule
 Hash table & $\sequencelength\wordsize$ & $\sequencelength\wordsize$ \\
 Tree database & $2\sequencelength\wordsize-2\wordsize$ & $2\wordsize+\epsilon \wordsize$ \\
    \end{tabular}
    \caption{Theoretical bounds by \inlinecite{laarman-isse2019} on the representation size of a sequence of size $\sequencelength$ over $\wordsize$-bit integers.}
    \label{tab:space_reqiurements_theory}
\end{table}

\paragraph{Implementation Details.}

A key component of tree database implementations is an \defined{indexed hash set} representing a bijective function $\indexedhashset$ that maps keys to indices with operations to insert a key and look up a key given its index. An indexed hash set for node keys $\tuple{\leftvertex(\vertex),\rightvertex(\vertex)}$ promotes the reuse of existing nodes in different trees. A second indexed hash set with keys $\tuple{\vertex, \sequencelength}$ where $\vertex$ is a root node and $\sequencelength$ is the respective sequence length encodes the tree structure $\treestructure$ for lookup. A third indexed hash set with keys from $\alphabet$ can deduplicate elements from $\alphabet$. We say that insertion and lookup of a sequence of length $\sequencelength$ into a forest of perfectly balanced trees is \emph{efficient} if it requires amortized $\bigO(\sequencelength)$ time. For a given key $\tuple{\vertex, \sequencelength}$, depth-first traversal using the tree structure induced by $\treestructure$ enables efficient lookup \cite{blom-et-al-entcs2008}. Insertion follows the same approach, but requires further analysis when dynamically resizing an indexed hash set.

\section{Dynamic Tree Databases}

We now describe our implementations of \emph{dynamic} tree databases based on Google's \emph{flat hash tables}, also known as Swiss tables \cite{benzaquen-et-al-2018}.

\subsection{Motivation}

\inlinecite{blom-et-al-entcs2008} proposed tree databases as statically sized hash ID maps, where the position of a node implicitly becomes its identifier. Estimating a reasonable size in advance is challenging when other parts of the system additionally allocate memory. For example, a static capacity for $2^{32}$ nodes of $8$ bytes each already results in $32$ GiB of memory usage. Therefore, we propose \emph{dynamic} tree databases that overcome this limitation by dynamically resizing when the database reaches its maximum capacity. Our dynamic tree databases build upon modern flat hash tables for hash maps.

\subsection{Flat Hash Tables}

Flat hash tables are open-addressing hash tables, meaning that they store keys compactly in a dynamic array of size $\capacity$, with a maximum load factor of $7/8$, and a hash function $\hash$. In addition to the keys, flat hash tables maintain a parallel array of control bytes, one per key, which store the upper $7$ bits of the hash value. Special control byte values (sentinels) are reserved to indicate empty or deleted slots. Hence, each table entry consists of a key and a control byte, resulting in an overhead of one byte per key.

Another key innovation of flat hash tables is the use of control bytes to filter potential key matches. Each control byte act as a hash fingerprint, allowing the table to identify likely matches without accessing the actual keys. This overapproximation has a low false positive rate ($\frac{1}{2^7}$, less than 1\%) and enables the use of SIMD instructions to compare $16$ control bytes in parallel.

When looking up a key $\key$, the algorithm begins at index $\hash(\key) \bmod \capacity$ and scans forward through the control bytes.
Only if the control byte matches, the actual key is compared, significantly reducing the number of expensive key comparisons and mitigating cache misses. To insert a key $\key$, the algorithm probes forward from $\hash(\key) \bmod \capacity$ to find the first empty control byte. It then stores $\key$ in the key array and its $7$-bit fingerprint in the control byte array at the same index.

\subsection{Dynamic Stable Tree Databases (\dtreedbsswiss{})}

Recall that a tree database consists of indexed hash sets that represent a bijective function $\indexedhashset$ from keys to indices. For the backward mapping $\indexedhashset^{-1}$, we use a dynamic array in which the $\keypos$-th position stores the node $\keypos = \tuple{\leftvertex(\keypos), \rightvertex(\keypos)}$. For the forward mapping $\indexedhashset$, we use a flat hash table, which references nodes in the array using an integer position $\keypos$. Inserting a new node appends it to the end of the array, if it does not already exist, with deduplication handled by the flat hash table. This indirection is cheap, as the probability of the control bytes falsely predicting matching keys is less than 1 percent. Assuming a maximum load factor of $1.0$ for the array and $7 / 8$ for the flat has table, the dynamic nature of our implementation avoids excessive unused space. Each stored node requires one $\wordsize$-bit word and one byte of overhead in the flat hash table. When the dynamic array or the flat hash table reaches its maximum load, we increase its size by a constant factor $\resizefactor > 1$.
Due to amortized $\bigO(1)$ insertion cost into dynamic arrays using a constant resize factor $\resizefactor > 1$, \dtreedbsswiss{} is efficient.

\begin{theorem}
 Inserting a sequence $\sequence$ of length $\sequencelength$ into \dtreedbsswiss{} runs in amortized $\bigTheta(\sequencelength)$ time.
\end{theorem}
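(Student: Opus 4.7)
The plan is to bound both the number of data-structure operations that a single insertion triggers and the amortized cost per operation, and then combine the two. For the first part, I would observe that a sequence $\sequence$ of length $\sequencelength$ is encoded by a perfectly balanced tree whose root has $\lfloor(\sequencelength+1)/2\rfloor$ leaves; since the tree is a binary tree with this many leaves (and with an odd tail element being absorbed into a parent rather than stored as its own node), the total number of nodes is $\bigTheta(\sequencelength)$. The insertion algorithm processes each node of this tree exactly once through a standard depth-first traversal driven by $\treestructure$, and at each node performs a constant number of operations on the indexed hash sets (one lookup, and at most one insertion into both the dynamic array storing $\indexedhashset^{-1}$ and the flat hash table storing $\indexedhashset$).

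Next, I would argue that each such per-node operation costs amortized $\bigO(1)$. For the dynamic array, appends are amortized $\bigO(1)$ under any constant growth factor $\resizefactor>1$, by the textbook doubling argument (potential method). For the flat hash table, lookups and insertions are expected $\bigO(1)$ under the stated maximum load factor of $7/8$, and resizing by the same constant factor $\resizefactor$ keeps insertions amortized $\bigO(1)$ as well. Since the root-with-length indexed hash set is touched at most once per sequence insertion, and the alphabet indexed hash set is touched $\bigO(\sequencelength)$ times (at most twice per leaf), the same amortized bound applies to those components. Multiplying $\bigTheta(\sequencelength)$ nodes by amortized $\bigO(1)$ cost per node yields an amortized upper bound of $\bigO(\sequencelength)$.

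For the matching lower bound, inserting $\sequence$ requires reading each of its $\sequencelength$ elements at least once to construct the leaf keys $\tuple{\sequence_{2i}, \sequence_{2i+1}}$, so any correct insertion takes $\Omega(\sequencelength)$ time in the worst case and hence also in the amortized sense. Combining the two directions gives amortized $\bigTheta(\sequencelength)$.

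The main obstacle I expect is the careful amortized analysis across the several indexed hash sets that are resized independently: a naive argument that multiplies a per-node amortized bound by the number of nodes is slightly fragile because resize events in different tables can coincide within a single sequence insertion. I would handle this by defining a single potential function that is the sum of per-table potentials (each of the form ``constant times current size minus constant times remaining free slots before the next resize''), so that the amortized cost of each elementary table operation is $\bigO(1)$ and the $\bigTheta(\sequencelength)$ operations triggered by one sequence insertion add up cleanly; this is where the argument requires the most care, but it is a direct application of the potential method rather than a new idea.
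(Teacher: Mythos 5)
Your proposal is correct and follows essentially the same route as the paper's own proof: count the $\bigTheta(\sequencelength)$ nodes of the perfectly balanced tree, argue that each per-node operation on the dynamic array and flat hash table is amortized constant time under a constant resize factor $\resizefactor>1$, and multiply. Your additional care about the explicit $\Omega(\sequencelength)$ lower bound and the combined potential function across independently resized tables is a sound refinement of the same argument rather than a different approach.
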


\begin{proof}
    A tree encoding a sequence $\sequence$ of length $\sequencelength$ contains $2\lfloor \sequencelength / 2 \rfloor - 1$ nodes, which is in $\bigTheta(\sequencelength)$. Each node insertion into the dynamic array and the flat hash table takes amortized $\bigTheta(1)$ time, given the constant resize factor $\resizefactor > 1$. Hence, inserting all nodes of the tree requires amortized $\bigTheta(\sequencelength)$ time.
\end{proof}

\subsection{Dynamic HashID Tree Databases (\dtreedbshashid{})}

In contrast to \dtreedbsswiss{}, \dtreedbshashid{} does not use an additional flat hash table, but instead, uses the hash location of a node $\tuple{\leftvertex(\keypos), \rightvertex(\keypos)}$ in the dynamic array as its corresponding index $\keypos$. Therefore, node indices may change upon resizing the hash table, which is triggered when the load factor reaches its maximum of $7/8$. We implemented resizing by rebuilding the trees in an array whose size increases by a constant factor $\resizefactor > 1$. The procedure employs a depth-first traversal that relocates tries in hash ID maps \cite{kanda-et-al-jea2019}, which we adapted for use with perfectly balanced trees. More specifically, it traverses each tree using the structural function $\treestructure$ and rebuilds the trees. The traversal requires a \dtreedbsswiss{} to store the root node index and its tree size, allowing us to return a stable node index while updating the unstable root node index upon resizing.
%The following theorem states the efficiency of \dtreedbshashid{}.

\begin{theorem}
    Inserting a sequence $\sequence$ of length $\sequencelength$ into \dtreedbshashid{} runs in amortized $\bigTheta(\sequencelength)$ time.
    \end{theorem}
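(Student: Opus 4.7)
The plan is to adapt the argument used for \dtreedbsswiss{}, since a perfectly balanced tree encoding $\sequence$ still contains $2\lfloor \sequencelength / 2 \rfloor - 1 \in \bigTheta(\sequencelength)$ nodes. The new difficulty is that \dtreedbshashid{} uses the array position of a node as its index, so reaching the load factor $7/8$ triggers a full rebuild of every stored tree. First I would show that a single node insertion between resizes runs in $\bigO(1)$ expected time: probing forward from $\hash(\cdot) \bmod \capacity$ under a bounded load factor scans only a constant expected number of slots, and the control-byte filter keeps the number of full key comparisons $\bigO(1)$ on average.

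Next I would bound the total rebuild cost via a standard geometric-series argument. Each rebuild at current size $m$ performs a depth-first traversal guided by $\treestructure$, starting from the stable root handles stored in the auxiliary \dtreedbsswiss{}, and reinserts every node into an array whose capacity has just grown by the factor $\resizefactor$. Since resizes occur only when the array fills and the capacity grows by a constant factor $\resizefactor > 1$, the rebuild sizes along any insertion sequence form a geometric progression, so the sum of all rebuild work incurred while reaching $m$ total nodes is $\bigO(m)$. Consequently, the amortized cost of inserting a single node is $\bigTheta(1)$, and the auxiliary \dtreedbsswiss{} contributes one additional $\bigTheta(1)$ amortized root-handle insertion per sequence by the previous theorem. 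Multiplying $\bigTheta(1)$ per node by $\bigTheta(\sequencelength)$ nodes yields the claimed $\bigTheta(\sequencelength)$ amortized bound.

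The main obstacle is justifying that the DFS rebuild truly visits each physical node only a constant number of times despite heavy subtree sharing across different trees in the forest: a naive rebuild that re-traverses every tree in full would charge a shared node once per containing tree, breaking the $\bigO(m)$ per-rebuild bound and thereby the geometric-series argument. To avoid this, I would appeal to the adaptation of the \inlinecite{kanda-et-al-jea2019} trie-relocation scheme to perfectly balanced trees referenced in the paper: by processing nodes in a fixed depth-first order driven by $\treestructure$ and writing each node's new array index back into any entry that still references it, the procedure relocates each surviving node exactly once, so any subsequent encounter of that node during a later root's traversal resolves in $\bigO(1)$ and does not trigger further work.
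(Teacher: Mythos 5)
Your proof follows the same core strategy as the paper's: count $\bigTheta(\sequencelength)$ nodes per tree, charge ordinary insertions $\bigO(1)$ amortized under the bounded load factor, and absorb the full-rebuild cost via the standard geometric-growth amortization over the $\resizefactor>1$ capacity increases. The one place you go beyond the paper is your last paragraph: the paper's proof simply posits the worst case of \emph{no} node sharing (where rebuild cost and stored-node count coincide) and dismisses sharing effects with a one-line remark, whereas you correctly identify that under heavy subtree sharing a naive per-root DFS could charge a shared node once per containing tree, inflating the rebuild cost well beyond the number of physical slots and breaking the geometric argument. Your appeal to the \citet{kanda-et-al-jea2019}-style relocation -- each surviving node is moved exactly once and later references resolve in $\bigO(1)$ -- is exactly the right fix and makes the amortization airtight in the general (shared) case, which the paper's proof only sketches. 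In short: same route, but you close a gap the paper leaves informal.
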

\begin{proof}
    In the worst case, assume a sequence of insertions where no nodes are shared. Then, each of the $\sequencecount$ inserted sequences of length $\sequencelength$ introduces $\bigTheta(\sequencelength)$ new nodes, for a total of $\bigTheta(\sequencecount\sequencelength)$ nodes. When the hash table resizes (e.g., when the load factor exceeds the load factor of $7 / 8$), it must rehash and rebuild all existing trees, incurring a cost of $\bigTheta(\sequencecount\sequencelength)$ for the depth-first copy. However, since the table size grows geometrically (by a factor $\resizefactor > 1$), each resize accommodates $\bigTheta(\sequencecount)$ additional sequences. Therefore, the amortized cost per inserted sequence remains $\bigTheta(\sequencelength)$. Compression may delay resizes, and while a rehash could change compression, such worst-case effects do not affect the amortized bound. To illustrate, the node $\vertex_3$ in Figure~\ref{fig:tree-database} is a leaf for $\sequence_1$ but also an inner node for $\sequence_0$, which might not be after resizing, when node positions change. 
\end{proof}

\subsection{Variable Ordering}

The number of nodes in a tree database depends on the order of elements from $\alphabet$ in a sequence $\sequence$, in our case, ground atoms, FDR, or numeric variables. 
Hence, finding a variable order that minimizes the number of nodes can further decrease the memory footprint. 
Intuitively, this follows from \citet{laarman-isse2019}, who demonstrate that tree compression is optimal when successive states differ in only one variable.
An ordering that localizes these state deltas to the smallest possible subtree guaranties that the truncated tree, ending at the deepest level where only one node is updated, is optimally compressed.

We define $\effect(\action)$ as the set of ground atoms, FDR, or numeric variables affected by an action. If the elements in $\effect(\action)$ appear close together in the sequence, then the trees for a state $\state$ and its successor $\state'$ share long common suffixes, yielding many shared nodes. Conversely, if the affected elements are far apart, more internal nodes differ, and fewer nodes are shared.

A variable order is a function $\permutation : \variables\rightarrow\{0,\ldots,|\variables|-1\}$ that is also bijective. A reasonable state-independent objective is to order the variables to minimize the number of leaf nodes affected by ground actions.

\[\min\sum_{\action\in\actions} |\{\lfloor\pi(\variable) / 2\rfloor\mid \variable\in\effect(\action)\}| \]

In lifted planning, computing variable orders is more challenging as all reachable ground atoms and actions are unknown in advance. Therefore, we focus on variable ordering in the FDR representation for grounded planning, where more than two variables fit into each node, with the exact number depending on the domains of the variables in that node. Essentially, this problem can be cast to the NP-complete bin packing problem with two competing objectives: 1) minimizing the number of bins, and 2) minimizing the sum of bins affected by all operators, analogous to the objective above. Therefore, we greedily address this problem by slightly changing the objective of the commonly used bin packing strategy used for the FDR representation that works as follows: Iteratively create a new bin $\bin$ of capacity $c$ if there are unpacked variables, followed by iteratively inserting unpacked variables into $\bin$ with the largest domain size until no variable fits. Instead, we choose the variable that maximizes the $\gain$, defined as follows:

\[
\gain(\variable) = 
\begin{cases}
    \sum_{\variable'\in \variables} \affinity(\variable, \variable') & \text{ if } \bin = \emptyset \\
    \sum_{\variable'\in \bin} \affinity(\variable, \variable') & \text{ if } \bin\neq\emptyset
\end{cases}
\]

where $\affinity$ is the affinity for a pair of variables defined as $\affinity(\variable,\variable') = \sum_{\action\in\actions} \mathbf{1}\{\variable\in\effect(\action),\variable'\in\effect(\action)\}$. Intuitively speaking, $\affinity$ captures how often two variables $\variable,\variable'$ appear together in the effect of any action. Moreover, $\gain$ captures the number of co-occurences of a variable $\variable$ in the effect with those already inside the bin.

\section{Experimental Setup}

We use Fast Downward \cite{helmert-jair2006} to evaluate the compression of the FDR state representation for grounded planning, and Mimir \cite{stahlberg-ecai2023} to evaluate the compression of the sparse state representation and numeric planning, with a focus on lifted planning. 

We incorporate FDR variables into tree databases by partitioning the FDR representation into $\wordsize$-bit words such that no finite-domain variable crosses a word boundary. Moreover, we encode numeric variables in the binary64 format, commonly referred to as $8$-byte double, and store such leaf nodes in a separate \dtreedbsswiss{}, while sharing inner nodes among propositional and numeric variables.

For all planner runs, we use A$^*$ search with the blind heuristic for ease of comparison, a resize factor $\resizefactor = 2$ for dynamic arrays, and a word size $\wordsize$ equal to $32$ bits. For Fast Downward, we also run with the saturated cost partitioning heuristic using a state-of-the-art configuration \cite{seipp-et-al-jair2020}.

We limit each planner run to $5$\,hours and $8$\,GiB of memory and conduct all experiments using the Lab toolkit \cite{seipp-et-al-zenodo2017} on a compute cluster equipped with Intel Xeon Gold 6130 CPUs.

The Fast Downward configurations are as follows:
\begin{itemize} \denselist
    \item \textbf{Hashset-Unpacked} uses the default Fast Downward hashset with a $32$-bit integer for each FDR variable.
    \item \textbf{Hashset-Packed} uses the default Fast Downward hashset with the packed FDR state representation.
    \item \textbf{\dtreedbsswiss{}} uses a \dtreedbsswiss{} and bit-packs multiple FDR variables into a $32$-bit leaf node entry.
    \item \textbf{\dtreedbshashid{}} uses a \dtreedbshashid{} instead.
    \item \textbf{Aff-\dtreedbsswiss{}} is \textbf{\dtreedbsswiss{}} with variable ordering based on the affinity objective.
    \item \textbf{Aff-\dtreedbshashid{}} is \textbf{\dtreedbshashid{}} with variable ordering based on the affinity objective.
\end{itemize}
The Mimir configurations are as follows:
\begin{itemize} \denselist
    \item \textbf{Grounded-Hashset} uses a hashset and the sparse propositional and dense numeric state representation.
    \item \textbf{Grounded-\dtreedbsswiss{}} uses a \dtreedbsswiss{} with perfectly balanced trees and a separate indexed hash set to represent numeric variable assignments as $8$-byte doubles.
    \item \textbf{Grounded-\dtreedbshashid{}} uses a \dtreedbshashid{} instead.
    \item \textbf{Lifted-Hashset} is like \textbf{Grounded-Hashset} but lifted.
    \item \textbf{Lifted-\dtreedbsswiss{}} is like \textbf{Grounded-\dtreedbsswiss{}} but lifted.
    \item \textbf{Lifted-\dtreedbshashid{}} is like \textbf{Grounded-\dtreedbshashid{}} but lifted.
\end{itemize}

The supplementary material contains the code for our tree database implementations. We will make all our code available upon publication.

\subsection{Benchmarks}

We consider 6495 classical planning tasks from 166 domains. 
This benchmark set contains all 1847 STRIPS \cite{fikes-nilsson-aij1971} and all 1011 ADL \cite{pednault-kr1989} tasks from the Optimal Tracks of the International Planning Competition (IPC) from 1998 to 2023,\footnote{\url{https://github.com/aibasel/downward-benchmarks}} the 900 STRIPS tasks from the learning track of the IPC 2023, the 992 tasks for transforming quantum circuits into CNOT-only layouts \cite{shaik-vandepol-ecai2024}, the 269 tasks from the Airbus Beluga challenge in 2024,\footnote{\url{https://github.com/TUPLES-Trustworthy-AI/Beluga-AI-Challenge}} the 1058 hard-to-ground tasks (some of which use conditional effects),\footnote{\url{https://github.com/abcorrea/htg-domains}} the 223 ADL tasks from the Pushworld domain which aim at representing reasoning tasks with differently shaped objects in a grid world,\footnote{\url{https://github.com/google-deepmind/pushworld}} and the 195 object-centric ADL tasks from the Minecraft domain that result in millions of ground atoms \cite{hill-et-al-icaps2024}.
Furthermore, we consider 575 numeric planning tasks from 36 domains. 
The benchmark set contains the 380 tasks with numeric variables from the Numeric Planning Track of the IPC 2023\footnote{\url{https://github.com/ipc2023-numeric/ipc2023-dataset}}, and the 195 object-centric numeric tasks from the Minecraft domain \cite{hill-et-al-icaps2024}.

\subsection{Performance Metrics}

For a planner configuration, we aggregate the results across all benchmark tasks with the following metrics:
\begin{itemize} \denselist
    \item \textbf{\coverage} is \emph{coverage}, i.e., the number of solved tasks.
    \item \textbf{\outofmem} is the number of tasks that run out of memory.
    \item \textbf{\outoftime} is the number of tasks that run out of time.
    \item \textbf{\memoryscore} is the sum of \emph{memory scores}.
\end{itemize}
For a planner run, the \emph{memory score} is 0 if the planner runs out of time or exceeds the 8\,GiB memory limit.
The memory score is 1 for planner runs that use at most 2\,MiB.
In between those two extremes, we interpolate the memory score logarithmically:
$\text{\memoryscore}(m) = 1-\nicefrac{\ln(m)}{\ln(U)}$, where $m$ is the memory usage of the state set and $U$ is the overall memory limit (8\,GiB). The memory score provides more granular insight into memory usage than \outofmem.

% Summary 	blind-hash_id_static_pck 	blind-hash_id_static_unpck 	blind-pck 	blind-static_pck 	blind-static_unpck 	blind-unpck 	scp-hash_id_static_pck 	scp-hash_id_static_unpck 	scp-pck 	scp-static_pck 	scp-static_unpck 	scp-unpck
% avg_edges_per_state - Sum 	1674.00 	1674.00 	8928.00 	2215.94 	3385.67 	260582.00 	1674.00 	1674.00 	8928.00 	2294.67 	4224.53 	260582.00
% coverage - Sum 	1468 	1539 	1557 	1574 	1565 	1275 	1466 	1495 	1515 	1518 	1500 	1428

\begin{table}[t]
    \setlength{\cmidrulekern}{15pt}
    \centering
    \begin{tabular}{@{}ll@{\asep}rrrr@{}}
        & Domain         & \coverage & \outofmem & \outoftime & \memoryscore \\
        \midrule
        {\multirow{6}{*}{\rotatebox[origin=c]{90}{Blind}}}
        & Hashset-Unpacked           & 1972           & 3705  & 0   & 661 \\
        & Hashset-Packed             & 2229           & 3393  & 25  & \textbf{782} \\
        & \dtreedbsswiss{}           & \textbf{2230}  & 3389  & 27  & 772 \\
        & \dtreedbshashid{}          & 2168           & 3472  & 17  & 753 \\
        & Aff-\dtreedbsswiss{}       & 2173           & 3453  & 24  & 760 \\
        & Aff-\dtreedbshashid{}      & 2111           & 3527  & 10  & 740 \\
        \midrule
        {\multirow{6}{*}{\rotatebox[origin=c]{90}{SCP}}}
        & Hashset-Unpacked           & 2455           & 2708  & 0   & 904 \\
        & Hashset-Packed             & \textbf{2659}  & 2504  & 7   & \textbf{980} \\
        & \dtreedbsswiss{}           & 2620           & 2535  & 12  & 964 \\
        & \dtreedbshashid{}          & 2575           & 2604  & 9   & 951 \\
        & Aff-\dtreedbsswiss{}       & 2627           & 2524  & 18  & 964 \\
        & Aff-\dtreedbshashid{}      & 2581           & 2575  & 13  & 951 \\
        \bottomrule
    \end{tabular}
    \caption{Results for the Fast Downward configurations on the 6495 classical planning tasks. We highlight the configuration with the highest coverage and memory score in boldface.}
    \label{tab:fast-downward-results-overview}
\end{table}

% & Domain         & \coverage & \outofmem & \outoftime & \memoryscore \\
% \midrule
% {\multirow{4}{*}{\rotatebox[origin=c]{90}{Blind}}}
% & Hashset-Unpacked           & 1275          & 2542   & 0  & 417.56       \\
% & Hashset-Packed             & 1557          & 3025   & 9  & 523.31       \\
% & \dtreedbsswiss{}           & \textbf{1574} & 2995   & 12 & \textbf{526.10} \\
% & \dtreedbshashid{}          & 1542          & 3082   & 9  & 514.24       \\
% \midrule
% {\multirow{4}{*}{\rotatebox[origin=c]{90}{SCP}}}
% & Hashset-Unpacked           & 1428          & 2096   & 1  & 503.92       \\
% & Hashset-Packed             & 1515          & 1998   & 5  & \textbf{537.81} \\
% & \dtreedbsswiss{}           & \textbf{1518} & 1988   & 14 & 536.61       \\
% & \dtreedbshashid{}          & 1492          & 2028   & 5  & 526.62       \\
% % & Hashset-Unpacked          & 1245          & 2515        & 25 & 406          \\
% % & Hashset-Packed            & \textbf{1342} & 2339        & 103         & \textbf{445}          \\
% % & \dtreedbsswiss{}-Unpacked & 1321          & 2086        & 377         & 441          \\
% % & \dtreedbsswiss{}-Packed   & 1332          & 2240        & 211         & 444          \\
% %& BDD                       & 1225          & 56          & 2503        & 361          \\
% %& LOES                      & 1242          & \textbf{14} & 2527        & \textbf{470} \\

\begin{table}[t]
    \centering
    \begin{tabular}{@{}lccccc@{}}
    \textbf{Quantiles} & Min & Q1 & Median & Q3 & Max \\
    \midrule
    \textbf{Bytes}     & 4   & 4  & 8      & 24  & 13828 \\
    \end{tabular}
    \caption{Distribution of bytes required for the FDR state representation aggregated over all classical planning tasks.
    }
    \label{tab:fdr-variable_count}
\end{table}

\begin{figure*}[tbp]
    \begin{tabular}{ccc}
        \includegraphics[width=0.31\textwidth]{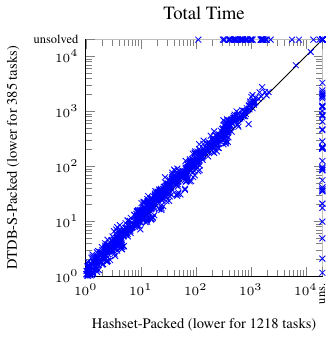} &
        \includegraphics[width=0.31\textwidth]{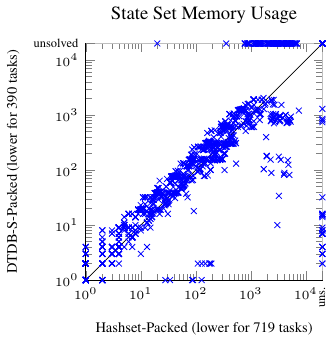} &
        \includegraphics[width=0.31\textwidth]{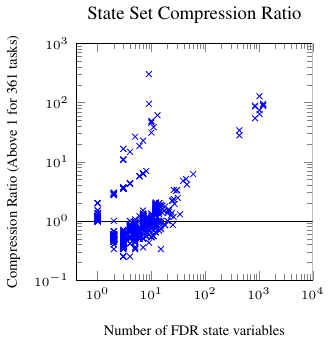}
    \end{tabular}
    \caption{Comparison of Hashset-Packed and \dtreedbsswiss{} (ours) showing the total time in seconds to solve a task (left), the memory usage for the state set in MiB (center) and the average number of state variables with the compression ratio between the configurations as described in Section 3.2 (right).}
    \label{fig:fast-downward-packed-comparison}
\end{figure*}

\section{Experimental Results for Fast Downward}

Table~\ref{tab:fast-downward-results-overview} shows that Hashset-Packed using SCP achieves the highest coverage of 2659 tasks, while the best tree database configuration Aff-\dtreedbsswiss{} achieves a lower coverage of 2627. In the blind search configuration, \dtreedbsswiss{} achieves only a coverage increase of one compared to Hashset-Packed. Most importantly, Hashset-Packed achieves the best memory score in both search configurations, demonstrating its general effectiveness across benchmark sets. 

Table~\ref{tab:fdr-variable_count} shows the number of bytes required by the FDR packed representation to represent a single state. For our classical planning benchmarks, we observe a median of 8 bytes, which fit into a single node. Trees with a single node result in no node sharing, but they add memory overhead per node, rendering tree databases redundant in this case. 

Figure~\ref{fig:fast-downward-packed-comparison} compares Hashset-Packed with the most promising dynamic tree database \dtreedbsswiss{} regarding time, state set memory usage, and compression ratio. In the leftmost figure, we observe that tree databases incur a small but constant runtime overhead, with no outliers. The center figure shows that the memory usage of \dtreedbsswiss{} is significantly higher in 719 tasks and lower in 390 tasks. This gap raises the question of whether there are cases where tree databases can outperform Hashset-Packed. The rightmost figure shows the state set compression ratio relative to the number of FDR state variables. The compression ratio increases almost linearly with the number of FDR variables, demonstrating the strength of tree databases on ground tasks with many FDR variables. For example, if the number of FDR state variables exceeds 30, the compression ratio is higher across all tasks except two.

We conclude that tree databases do not outperform Hashset-Packed in general. However, they can substantially reduce memory usage when the number of FDR state variables is large, a property that can be easily checked before running a search.

\begin{table}[t]
    \setlength{\cmidrulekern}{15pt}
    \centering
    \begin{tabular}{@{}ll@{\asep}rrrr@{}}
        & Domain         & \coverage & \outofmem & \outoftime & \memoryscore \\
        \midrule
 {\multirow{6}{*}{\rotatebox[origin=c]{90}{Classical}}}
        & Grounded-Hashset           & 1965          & 4420          & 96          & 1080          \\
        & Grounded-\dtreedbsswiss{}  & 2012          & 3959          & 510         & 1101          \\
        & Grounded-\dtreedbshashid{} & 2010          & 3993          & 478         & 1124          \\
        & Lifted-Hashset             & 2037          & 4182          & 262         & 1146          \\
        & Lifted-\dtreedbsswiss{}    & \textbf{2100} & 3809          & 572         & 1164          \\
        & Lifted-\dtreedbshashid{}   & 2094          & 3822          & 565         & \textbf{1188} \\
        \midrule
 {\multirow{6}{*}{\rotatebox[origin=c]{90}{Numeric}}}
        & Grounded-Hashset           & 216          & 341          & 18         & 88          \\
        & Grounded-\dtreedbsswiss{}  & \textbf{219} & 350          & 6          & 87          \\
        & Grounded-\dtreedbshashid{} & \textbf{219} & 350          & 6          & \textbf{90} \\
        & Lifted-Hashset             & 216          & 337          & 22         & 89          \\
        & Lifted-\dtreedbsswiss{}    & 218          & 339          & 18         & 88          \\
        & Lifted-\dtreedbshashid{}   & 218          & 340          & 17         & \textbf{90} \\
        \bottomrule
    \end{tabular}
    \caption{Results for the Mimir configurations on the 6495 classical and 380 numeric tasks.}
    \label{table:mimir-results-overview}
\end{table}

\begin{figure*}[t]
    \begin{tabular}{ccc}
        \includegraphics[width=0.31\textwidth]{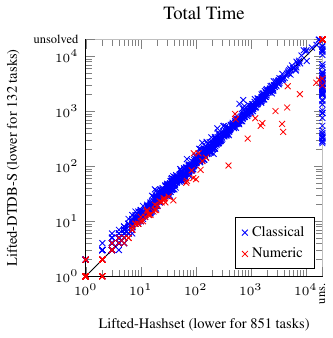} &
        \includegraphics[width=0.31\textwidth]{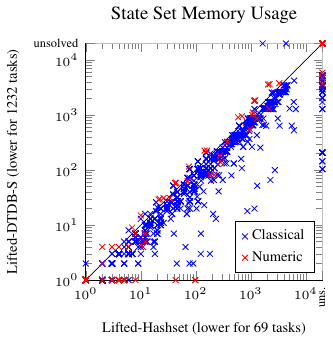} &
        \includegraphics[width=0.31\textwidth]{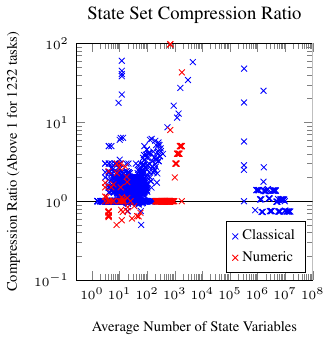}
    \end{tabular}
    \caption{Comparison of Lifted-Hashset and Lifted-\dtreedbsswiss{} (ours) showing the total time in seconds to solve a task (left), the memory usage for the state set in MiB (center), and the average number of state variables with the compression ratio between the configurations as described in Section 3.2 (right).}
    \label{fig:mimir-results-detail}
\end{figure*}

\section{Experimental Results for Mimir}

Table~\ref{table:mimir-results-overview} shows results for the Mimir configurations. We observe that all tree database configurations achieve strictly higher coverage than the hash set configurations. Furthermore, this change drastically decreases the number of times we run out of memory. Lastly, there is no significant overhead for resizing a hash ID map tree database, as the number of out-of-memory and out-of-time errors is almost identical to that of the off-the-shelf Swiss table variation. In contrast to \dtreedbsswiss{}, our hash ID map implementation causes significant memory spikes upon resizing, and explains why \dtreedbshashid{} does not achieve the highest coverage, despite achieving the best memory score of 1188.

Figure~\ref{fig:mimir-results-detail} shows a more detailed comparison between Lifted-\dtreedbsswiss{} and Lifted-Hashset.
The leftmost plot shows that tree databases come essentially for free, with negligible runtime overhead. The center plot reveals that memory usage rarely increases and often decreases significantly, i.e., lower memory usage for 2180 tasks and higher in only 84 tasks. The rightmost plot shows that the compression ratio approaches two orders of magnitude. For numeric planning, we often see reductions of at least one order of magnitude.

\section{Future Work}

\paragraph{Compact Hashing.} Compact hashing solves a restricted variant of the dictionary problem where keys are drawn from a universe $U = \{0,1,\ldots,u\}$ and the hash function $h$ is bijective \cite{cleary-ieeecomp1984}. In a hash set with $b$ buckets, e.g., with closed addressing \cite{koppl-et-al-fast2022}, rather than storing the entire key $x$ in bucket $i = h(x) \bmod b$, compact hashing stores only the quotient $q(x) = \lfloor h(x) / b \rfloor$. The original key can then be reconstructed as $h^{-1}(q(x) \cdot b + i)$, leveraging the bijectivity of $h$. This representation approaches the information-theoretic lower bound $\mathcal{B}(u, m) = \log_2\binom{u}{m} = m \log(u) - m \log(m) + \mathcal{O}(m)$ for storing a subset of $m$ keys from $U$. Compact hashing operates at the bit level, imposing additional runtime costs for reading and writing data. In the context of tree databases, keys correspond to nodes, i.e., pairs of $\wordsize$-bit integers determined by the node capacity, interpreted as a single $2w$-bit integer. Each node thus requires $w$ bits plus $O(1)$ bookkeeping bits for storage. Although compact hashing can reduce memory usage by roughly a factor of two compared to conventional methods, the associated bit-level access overhead makes it impractical for planning applications.

\paragraph{Tree Structures.} Early results indicate that the Unpacked-Tree becomes smaller when structuring the tree to minimize the number of potential entries. We achieved this with Huffman minimum redundancy codes, which is an optimal greedy strategy for structuring a tree with weighted leaf nodes such that the average weighted path from the leaf to the root is minimized \cite{huffman-1952}. The weight in this case is the number of potential entries, i.e., the size of the mutex groups in the FDR representation. This means that variables with larger groups, which contribute more to the total number of entries, can be placed closer to the root of the tree, thereby reducing the total representation size. 

\section{Conclusions}

We proposed dynamic variations of tree databases to encode sequences over an arbitrary finite alphabet. Our first variation uses off-the-shelf flat hash table implementations for simplicity, while our second variation uses hash ID maps, which require less memory per node. We applied dynamic tree databases to classical and numeric planning benchmarks and empirically compared them against three standard state representations: FDR, sparse propositional, and dense numeric. We observe compression ratios of several orders of magnitude in the sparse and numeric setting, often with negligible runtime overhead. Particularly in numeric benchmarks, dynamic tree databases often use less memory and solve more tasks than traditional representations.

\section*{Acknowledgements}

We would like to thank the anonymous reviewers and Elliot Gestrin for providing valuable feedback on an earlier draft of this work. This work was partially supported by the Wallenberg AI, Autonomous Systems and Software Program (WASP) funded by the Knut and Alice Wallenberg Foundation. The computations were enabled by resources provided by the National Academic Infrastructure for Supercomputing in Sweden (NAISS), partially funded by the Swedish Research Council through grant agreement no. 2022-06725.

\bibliography{abbrv,literatur,crossref,extra}

\end{document}